\DeclareMathOperator*{\argmax}{arg\,max}
\journal{ }
\begin{document}

\begin{frontmatter}



\title{Significance-Based Categorical Data Clustering}


\author[a]{Lianyu Hu} 
\ead{hly4ml@gmail.com}

\author[a]{Mudi Jiang}

\author[a]{Yan Liu}

\author[a,b]{Zengyou He} 
\ead{zyhe@dlut.edu.cn}

\affiliation[a]{organization={School of Software, Dalian University of Technology}, 
	city={Dalian, 116024}, 
	country={China}}

\affiliation[b]{
	organization={Key Laboratory for Ubiquitous Network and Service Software of Liaoning Province},
	city={Dalian, 116024}, 
	country={China}}

\begin{abstract}
Although numerous algorithms have been proposed to solve the categorical data clustering problem, how to access the statistical significance of a set of categorical clusters remains unaddressed. To fulfill this void, we employ the likelihood ratio test to derive a test statistic that can serve as a significance-based objective function in categorical data clustering.  Consequently, a new clustering algorithm is proposed in which the significance-based objective function is optimized via a Monte Carlo search procedure. As a by-product, we can further calculate an empirical $p$-value to assess the statistical significance of a set of clusters and develop an improved gap statistic for estimating the cluster number. Extensive experimental studies suggest that our method is able to achieve comparable performance to state-of-the-art categorical data clustering algorithms. Moreover, the effectiveness of such a significance-based formulation on statistical cluster validation and cluster number estimation is demonstrated through comprehensive empirical results. 
\end{abstract}

\begin{highlights}
\item The categorical data clustering issue is studied from a significance testing aspect. 
\item A new objective function is derived for clustering categorical data. 
\item A significance-based algorithm is proposed for clustering categorical data. 
\item An empirical $p$-value is calculated for clustering tendency evaluation.
\end{highlights}

\begin{keyword}
Clustering \sep Categorical Data \sep Statistical Significance \sep Likelihood Ratio Test

\end{keyword}

\end{frontmatter}


\section{Introduction}
\label{sec:introduction} 
In many real applications \cite{Andreopoulos2009, Powers2008}, we need to conduct cluster analysis on categorical data sets. In such types of data sets, each feature is qualitative in nature and takes discrete values. To solve the categorical data clustering problem, many effective algorithms have been proposed during the past decades \cite{Naouali2020}. Although existing categorical data clustering algorithms are developed based on different principles \cite{Bai2015}, most of them typically formulate the clustering problem as an optimization problem. Such optimization-based approaches can always report a set of clusters, no matter whether really there exists a cluster structure \cite{Adolfsson2019, Hennig2015}. Thus, we need to develop a notion of statistical significance for assessing the clustering result on a categorical data set. 

To date, there are already some research efforts on how to assess the statistical significance of clusters (e.g., \cite{Ling1973, Liu2008, Petrie2013, Bock1985, Smith1984, Maitra2012}). In general, these methods fall into two categories: testing the statistical significance of an individual cluster (e.g., \cite{Ling1973, Petrie2013}) and assessing the statistical significance of an entire partition (e.g., \cite{Liu2008, Maitra2012}). These significance-based evaluation methods can be either used for validating clustering results generated by existing algorithms or employed as an objective function for directly finding statistically significant clusters. To obtain statistically significant clusters, one can either sequentially extract individual clusters from the given data set or directly divide the data set into $k$ clusters ($k>1$).

Unfortunately, almost all the existing significance-based methods are developed for assessing and detecting clusters from numerical data sets. One notable exception is the clustering algorithm introduced in \cite{Zhang2006}, which iteratively finds a statistically significant cluster from the categorical data set. That is, the method in \cite{Zhang2006} is developed for the purpose of assessing the statistical significance of an individual categorical cluster. To our knowledge, there is still no method available in the literature that can be directly used for quantifying the statistical significance of a set of non-overlapped categorical clusters (i.e., a partition of the given categorical data set). 

In this paper, we investigate the issue of assessing the statistical significance of a set of categorical clusters.  Under the null hypothesis of an individual cluster (the categorical data set is a randomized data set with no clustering structure) and the alternative hypothesis of $k$ clusters, we obtain a test statistic based on the likelihood ratio (LR) test. To demonstrate the effectiveness of such a testing-based approach, we first develop a new clustering algorithm in which the test statistic is utilized as the objective function. Consequently, we show that it is feasible to assess whether a set of detected clusters is statistically significant based on an empirical $p$-value derived from the null distribution of test statistics. Finally, the test statistic can also be used for determining the number of true clusters when it is plugged into the method based on Gap statistic \cite{Tibshirani2001}. Experimental results show that the significance-based clustering algorithm can beat existing algorithms.  Meanwhile, the test statistic and the derived $p$-value are effective for evaluating the realness of clustering results and choosing a proper number of clusters. 

The main contributions of this paper can be summarized as follows:
\begin{itemize}
	\item A new significance-based objective function is proposed, which is the first attempt to assess the statistical significance of a set of categorical clusters.
	\item Based on significance-based objective function, a new clustering algorithm based on the Monte Carlo search procedure is presented. Experimental results on real categorical data sets show that the proposed method is competitive with existing categorical data clustering methods in terms of both accuracy and running time.
	\item We calculate an empirical $p$-value to determine whether a partition over a categorical data set is statistically significant. We also discuss how to choose the number of clusters using the LR test statistic. Empirical results on both simulated and real data sets verify the effectiveness of our method on clustering tendency evaluation and model selection. 
\end{itemize}

The remaining parts of this paper are organized as follows: Section \ref{sec:relatedwork} discusses related work. Section \ref{sec:methods} presents our method in detail. Section \ref{sec:experiments} evaluates the proposed method through experiments on real and simulated data sets. Section \ref{sec:conclusion} concludes this paper and discusses future works.

\section{Related Work}
\label{sec:relatedwork}
Since we focus on assessing the statistical significance of a set of categorical clusters, this section mainly discusses related algorithms for clustering categorical data and assessing the statistical significance of clusters.

\subsection{Categorical data clustering}
Generally, each categorical data clustering method comprises two key components: an objective function and a search procedure. Given an objective function, we typically conduct the search procedure in a partitional (e.g., $k$-modes \cite{Huang1998}) or hierarchical (e.g., ROCK \cite{Guha2000}) manner. Current objective functions in categorical data clustering can be divided into the following two types.

(1) \textit{Pairwise comparison-based objective function} is defined according to the pairwise distance between two data points. The Hamming distance is probably the most widely used distance metric for categorical data \cite{Ng2007, Bera2021}, and more distance measures are summarized and discussed in \cite{Boriah2008, Zhang2019}. Recently, similarity-based representation learning methods have received much attention \cite{IamOn2010, Jian2018, Zheng2020, Zhang2022, Zhu2022, Bai2022}.

(2) \textit{Group-based objective function} evaluates a cluster or a set of clusters without relying on a specific distance metric. Entropy is a classical notion for measuring the uncertainty of discrete random variables \cite{Thomas2006} and is widely used in categorical data clustering \cite{Chen2008, Sharma2020}. It is natural to employ the entropy to evaluate the goodness of a set of clusters \cite{Barbara2002} or measure the similarity between clusters \cite{Bai2018}. In hierarchical categorical clustering, entropy is used to measure the compactness (homogeneity) of a cluster \cite{Andritsos2004, Cesario2007, Xiong2012}.

In this paper, we propose an objective function that can be regarded as a group-based one which has a clear statistical interpretation. This new objective function can address what is a natural clustering of categorical data sets from a significance testing aspect. Those entropy-based objective functions \cite{Li2004} can be derived from probabilistic clustering models, such as the Bernoulli mixture models. In contrast, our objective function is obtained from a hypothesis testing procedure.

\subsection{Statistical significance of clusters}
To assess the statistical significance of clusters, many effective methods have been proposed. These methods are either generic or developed for specific clustering algorithms (e.g., $k$-means \cite{Liu2008}, Ward’s linkage clustering \cite{Kimes2017}, density-based clustering \cite{Xie2021}). In general, these methods can be classified into the following two categories. 

(1) The evaluation of an \textit{individual cluster}. When testing an individual cluster, there are usually two cases. One is to test the uniformity for a complete data set, which is recommended to be the first step in data analysis \cite{Petrie2013}. The other is to test the homogeneous subsets of a data set in which within-cluster variance can be used \cite{Park2009}.

(2) The evaluation of an \textit{entire partition} (a set of non-overlapped clusters). These methods mainly intend to determine whether a given clustering result in terms of a partition is statistically significant. Moreover, some methods are devoted to choosing a proper number of clusters by comparing two partitions under different cluster numbers \cite{Maitra2012}. 

As summarized in Fig.\ref{fig:sig-roadmap}, existing significance-based methods assess the statistical significance using either an analytical method or a sampling method. Although the analytical method is appealing since it can provide a closed-form solution, it is a challenging task to derive the analytical solution due to the lack of a proper null model. As a result, existing methods mainly employ sampling methods such as using Monte Carlo \cite{Xie2022} and Bootstrap \cite{Helgeson2021} to obtain an empirical significance evaluation result.

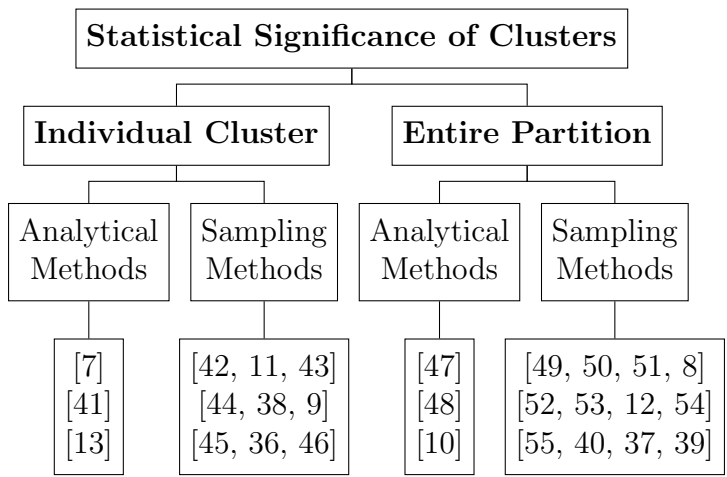
\begin{figure}[t]
	\centering
	\begin{forest}
		for tree={
			draw,
			align=center
		},
		forked edges,
		[\textbf{Statistical Significance of Clusters}
		[\textbf{Individual Cluster}
		[Analytical\\Methods
		[\cite{Ling1973}\\
		\cite{BaileyJr1982}\\
		\cite{Zhang2006}
		]
		]
		[Sampling\\Methods
		[\cite{Lee1979, Smith1984, Suzuki2006}\\
		\cite{Wieland2007, Park2009, Petrie2013}\\
		\cite{Bruzzese2015, Kimes2017, Liu2019}
		]
		]
		]
		[\textbf{Entire Partition}
		[Analytical\\Methods
		[\cite{Ling1976}\\
		\cite{Hartigan1978}\\
		\cite{Bock1985}
		]
		]
		[Sampling\\Methods
		[\cite{Gordon1996, Auffermann2002,Pacifico2007,Liu2008} \\
		\cite{Fuentes2009, Neill2012, Maitra2012, Huang2015}\\
		\cite{Valk2021, Helgeson2021, Xie2021, Xie2022}
		]
		]
		]
		]
	\end{forest}
	\caption{The roadmap of existing methods for measuring the statistical significance of clusters.}
	\label{fig:sig-roadmap}
\end{figure}

Except for the method in \cite{Zhang2006}, all the methods in Fig.\ref{fig:sig-roadmap} are not developed for categorical data sets. The algorithm in \cite{Zhang2006} sequentially extracts statistically significant clusters from a categorical data set. That is, in each iteration, one statistically significant cluster is identified and removed from the remaining data. To assess the statistical significance of an individual cluster, the chi-square test is employed for finding a cluster center and determining the cluster radius.  

In this paper, we investigate the issue of assessing the statistical significance of a partition of the categorical data, which is different from the algorithmic issue of assessing an individual cluster in \cite{Zhang2006}.

\section{Methods}
\label{sec:methods}
\subsection{Problem formulation}
Let $X=[X_1,\cdots,X_M]$ denote a categorical data set with $M$ attributes. Each $X_m=[x_{m1},\cdots,x_{mN}]$ is a sequence of $N$ categorical values  for $1\leq m\leq M$, where $N$ is the number of objects. We use $x_m$ to denote a categorical variable for $X_m$ that takes on values within the set $\{1,\cdots,Q_m\}$. Given $X_m$, we use $N_{mq}$ to represent the frequency of $x_m=q$ for $1\leq q\leq Q_m$. 

The aim of cluster analysis is to divide $X$ into $K$ non-overlapped clusters  $\pi=\{\pi_1,\cdots,\pi_K\}$, where the $k$-th cluster contains $N^{(k)}$ objects for $1\leq k\leq K$. $X_m^{(k)}=[x_{m1}^{(k)},\cdots,x_{mN^{(k)}}^{(k)}]$ is used to denote the sequence of categorical values of the $m$-th attribute in the $k$-th cluster and $N_{mq}^{(k)}$ is used to represent the frequency of $x_m^{(k)}=q$ for $1\leq q\leq Q_m$. To evaluate the goodness of a given partition $\pi$, we typically define an objective function that can output a numeric score. In this paper, we focus on evaluating the statistical significance of an entire partition. That is, the test statistic and the corresponding $p$-value will be employed as the objective function. 

\subsection{Significance-based objective function}
To test the statistical significance of a set of categorical clusters, we first present a naive random model that can describe the underlying categorical data-generation mechanism under the null hypothesis of an individual cluster. Then, we obtain a test statistic using the LR test under the alternative hypothesis that there are $K$ clusters in the given data set. The test statistic can serve as the significance-based objective function for testing the hypothesis: $H_0$: There is no clustering structure versus $H_1$: There are $K$ clusters. 

\subsubsection{Null and alternative models}
A categorical data set $X$ is generated from $M$ trials, assuming \footnote{Although this assumption is a simplification of real-world data in which attributes may not be independent, some analyses have shown that this independence assumption is theoretical effective (e.g., in Naive Bayes classifiers \cite{Zhang2004}).} that each attribute $X_m$ is independent of the other attributes. For each attribute, $X_m$ is assumed to be generated from a multinomial distribution \cite{Murphy2023}. 

Let $x_m \sim \mu(\bm{\theta}_{m},Q_m)$ be a discrete random variable drawn from a multinomial distribution with $Q_m$ potential discrete outcomes, where $\bm{\theta}_{m}=[\theta_{m1},$ $\cdots,\theta_{mQ_m}]$ is the parameter vector. Let $\bm{\theta_0}=[\theta_{11},\cdots,\theta_{MQ_M}]$ denote the parameter vector of length  $\sum\limits_{m=1}^{M} Q_m$ under the null hypothesis, where each $\theta_{mq}$ ($m=1,\cdots,M$, $q=1,\cdots,Q_m$) denotes the expected probability of the $q$-th categorical value in $X_m$ from $X$. As we have a vector of sufficient statistics $N_{mq}$ ($m=1,\cdots,M$, $q=1,\cdots,Q_m$) for the parameter vector $\bm{\theta_0}$, the multinomial likelihood function for $\bm{\theta}_m$ given $X_m$ is:
\begin{equation}
	L(\bm{\theta}_m|X_m)=\prod\limits_{n=1}^{N} \mu (x_{mn}|\bm{\theta}_m)=\prod\limits_{q=1}^{Q_m}{\left(\theta_{mq}\right)^{N_{mq}}}.
\end{equation}

Then, given $X$, the likelihood function for $\bm{\theta_0}$ is:
\begin{equation}
	L (\bm{\theta_0}|X)=\prod\limits_{m=1}^{M} L(\bm{\theta}_m|X_m)=\prod\limits_{m=1}^{M}\prod\limits_{q=1}^{Q_m}{(\theta_{mq})^{N_{mq}}}.
\end{equation}

\newtheorem{thm}{Theorem}
\newtheorem{lem}[thm]{Lemma}
\begin{lem}
	\label{lem1}
	The maximum likelihood estimators (MLE)
	for unknown parameters are $\hat{\theta}_{mq}=\frac{N_{mq}}{N}$.
\end{lem}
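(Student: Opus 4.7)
The plan is to maximize the log-likelihood on the probability simplex using Lagrange multipliers, exploiting the attribute-wise factorization of $L(\bm{\theta_0}|X)$. First I would take logarithms of the joint likelihood to obtain
\begin{equation*}
\ell(\bm{\theta_0}|X)=\log L(\bm{\theta_0}|X)=\sum_{m=1}^{M}\sum_{q=1}^{Q_m} N_{mq}\log\theta_{mq}.
\end{equation*}
Because the simplex constraints $\sum_{q=1}^{Q_m}\theta_{mq}=1$ decouple across $m$, the joint maximization splits into $M$ independent subproblems, one per attribute: maximize $\sum_{q=1}^{Q_m} N_{mq}\log\theta_{mq}$ over $\bm{\theta}_m$ on its simplex.

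Next, for a fixed $m$, I would introduce a Lagrange multiplier $\lambda_m$ for the equality constraint and consider
\begin{equation*}
\mathcal{L}_m(\bm{\theta}_m,\lambda_m)=\sum_{q=1}^{Q_m} N_{mq}\log\theta_{mq}-\lambda_m\Bigl(\sum_{q=1}^{Q_m}\theta_{mq}-1\Bigr).
\end{equation*}
Setting $\partial\mathcal{L}_m/\partial\theta_{mq}=0$ gives $\theta_{mq}=N_{mq}/\lambda_m$. Substituting back into $\sum_q\theta_{mq}=1$ and using $\sum_{q=1}^{Q_m} N_{mq}=N$ (since every one of the $N$ objects contributes exactly one outcome on attribute $m$) yields $\lambda_m=N$, hence $\hat{\theta}_{mq}=N_{mq}/N$.

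Finally, I would verify that this stationary point is a global maximum rather than a saddle or minimum. The objective $\sum_q N_{mq}\log\theta_{mq}$ is concave in $\bm{\theta}_m$ on the open simplex (as a nonnegative combination of the concave maps $\theta_{mq}\mapsto\log\theta_{mq}$), and the feasible set is convex, so the KKT stationary point is the unique maximizer. I do not anticipate a substantial obstacle; the only subtlety is the boundary case in which some $N_{mq}=0$, where the corresponding $\hat{\theta}_{mq}=0$ sits on the simplex boundary. Adopting the convention $0\log 0:=0$, those coordinates drop out of the objective and the above argument applies to the reduced simplex over the remaining categories, so the formula $\hat{\theta}_{mq}=N_{mq}/N$ holds in all cases.
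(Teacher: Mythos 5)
Your proof is correct, and it takes a slightly different — and in fact cleaner — route than the paper's. The paper forms a single Lagrangian with one multiplier $\lambda$ attached to the \emph{aggregated} constraint $\sum_{m=1}^{M}\sum_{q=1}^{Q_m}\theta_{mq}=M$, derives $\theta_{mq}=N_{mq}/\lambda$, and then uses $\sum_{q}N_{mq}=N$ to conclude $\lambda=N$. You instead exploit the fact that the simplex constraints decouple across attributes, solve $M$ independent subproblems with one multiplier $\lambda_m$ each, and find $\lambda_m=N$ for every $m$. The two computations land in the same place, but yours is the more rigorous formulation: the paper's single aggregated constraint is strictly weaker than the $M$ per-attribute constraints $\sum_{q}\theta_{mq}=1$, and the paper's argument only recovers the correct normalization because all the multipliers happen to coincide at $N$; your decoupled version enforces each constraint directly and avoids that subtlety. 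You also add two things the paper omits entirely: a concavity argument showing the stationary point is the global maximizer, and the boundary case $N_{mq}=0$ handled via the convention $0\log 0=0$. Neither addition is strictly necessary for the lemma as used downstream, but both strengthen the argument.
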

\begin{proof}
	The MLE on the log-likelihood is equivalent to its counterpart on the original likelihood function since logarithm is strictly concave. The logarithmic $L(\bm{\theta_0}|X)$ can be written as:
	\begin{equation}
		\log L(\bm{\theta_0})=\sum\limits_{m=1}^{M}\sum\limits_{q=1}^{Q_m}N_{mq}\log{\theta_{mq}}.
	\end{equation}
	Given the constraint that $\sum\limits_{m=1}^{M}\sum\limits_{q=1}^{Q_m}{\theta_{mq}}=\sum\limits_{m=1}^{M}1=M$, we have the Lagrangian as:
	\begin{equation}
		\mathcal{L}(\bm{\theta_0},\lambda)=\log L(\bm{\theta_0})+\lambda(M-\sum\limits_{m=1}^{M}\sum\limits_{q=1}^{Q_m}{\theta_{mq}}).
	\end{equation}
	We differentiate the Lagrangian w.r.t. $\theta_{mq}$
	\begin{equation}
		\begin{split}
			\frac{\partial}{\partial\theta_{mq}} \mathcal{L}(\bm{\theta_0}, \lambda) & = \frac{\partial}{\partial\theta_{mq}}\log L(\bm{\theta_0})
			 + \frac{\partial}{\partial\theta_{mq}} \lambda (M-\sum\limits_{m=1}^{M}\sum\limits_{q=1}^{Q_m}{\theta_{mq}})\\ & = \frac{\partial}{\partial\theta_{mq}}\sum\limits_{m=1}^{M}\sum\limits_{q=1}^{Q_m}N_{mq}\log{\theta_{mq}} + \frac{\partial}{\partial\theta_{mq}}\lambda M - \frac{\partial}{\partial\theta_{mq}}\sum\limits_{m=1}^{M}\sum\limits_{q=1}^{Q_m}\lambda {\theta_{mq}}  \\ & = \sum\limits_{m=1}^{M}\sum\limits_{q=1}^{Q_m}\frac{\partial}{\partial\theta_{mq}}(N_{mq}\log{\theta_{mq}}-\lambda \theta_{mq}) = \sum\limits_{m=1}^{M}\sum\limits_{q=1}^{Q_m}(\frac{N_{mq}}{\theta_{mq}}-\lambda). 
		\end{split}
	\end{equation}
	Setting $\frac{\partial}{\partial\theta_{mq}} \mathcal{L}(\bm{\theta_0}, \lambda)=0$ for each $(m,q)$ yields
	\begin{equation}
		\label{eq:MLE}
		\theta_{mq}=\frac{N_{mq}}{\lambda}.
	\end{equation}
	Using the sum-to-$M$ constraint and $\sum\limits_{q=1}^{Q_m}N_{mq}=N$, we have
	\begin{equation}
			M =\sum\limits_{m=1}^{M}\sum\limits_{q=1}^{Q_m}{\theta_{mq}} = \frac{1}{\lambda}\sum\limits_{m=1}^{M}\sum\limits_{q=1}^{Q_m}N_{mq}\\
			=\frac{1}{\lambda}\sum\limits_{m=1}^{M}N=\frac{1}{\lambda}(M\cdot N).
	\end{equation}
	
	Hence we have $\lambda=N$ and bring it into Equation \ref{eq:MLE}, the MLE for each parameter can be written as $\hat{\theta}_{mq}=\frac{N_{mq}}{N}$.
\end{proof}

Given a partition $\pi$ with $K$ non-overlapped clusters, $\pi_k$ is used to denote the $k$-th cluster. A categorical data set $X$ under a partition $\pi$ is generated from $K$ trials, assuming that objects in each cluster $\pi_k$ is independent of the objects in other clusters. Let $\bm{\theta_1}=\left[\theta^{(1)}_{11},\cdots,\theta^{(K)}_{MQ_M}\right]$ denote the parameter vector under the alternative hypothesis of $K$ clusters, where $\theta^{(k)}_{mq}$ ($q=1,\cdots,Q_m$, $k=1,\cdots,K$) denotes the expected probability of the $q$-th categorical value in $X_m^{(k)}$ conditioned on $\pi_k$. In each cluster, we have a vector of sufficient statistics $N_{mq}^{(k)}$ ($m=1,\cdots,M$) for the parameter vector $\bm{\theta_1}$, and the likelihood function for $\bm{\theta_1}$ given $\pi$ is:
\begin{equation}
		L(\bm{\theta_1}|X,\pi)  =\prod\limits_{k=1}^{K}L\left(\bm{\theta_1}|\pi_k\right)  \\
		= \prod\limits_{k=1}^{K}\prod\limits_{m=1}^{M}\prod\limits_{q=1}^{Q_m}{\left(\theta_{mq}^{(k)}\right)^{N_{mq}^{(k)}}},
\end{equation}
where $\prod\limits_{q=1}^{Q_m}{\theta_{mq}^{(k)}}=1$, $\sum\limits_{q=1}^{Q_m}N_{mq}^{(k)}=N^{(k)}$.

The MLE for each $\hat{\theta}_{mq}^{(k)}$ is $\frac{N_{mq}^{(k)}}{N^{(k)}}$, which can be derived in a similar manner as shown in the previous proof.

\subsubsection{LR test}
The LR test is a natural choice for quantifying the significance of a partition under two models, especially where these two models are nested. In our context, the LR test compares the goodness-of-fit between a more complex model $L^*(\boldsymbol{\theta_1}|X,\pi^*)$ and a less complex model $L^*(\boldsymbol{\theta_0}|X)$ in terms of the number of parameters, where $L^*(\boldsymbol{\theta_0}|X)$  and $L^*(\boldsymbol{\theta_1}|X,\pi^*)$ are the maximal likelihoods under the null and alternative hypotheses, respectively. The likelihood ratio test statistic $\Lambda^*$ has the following form:
\begin{equation}
		\Lambda^*(\boldsymbol{\theta}|X,\pi^*) = \frac{L^*(\boldsymbol{\theta_0}|X)}{L^*(\boldsymbol{\theta_1}|X,\pi^*)}  =\frac{\prod\limits_{m=1}^{M}\prod\limits_{q=1}^{Q_m}{(\theta_{mq})^{N_{mq}}}}{\prod\limits_{k=1}^{K}\prod\limits_{m=1}^{M}\prod\limits_{q=1}^{Q_m}{\left(\theta_{mq}^{(k)}\right)^{N_{mq}^{(k)}}}}.
\end{equation}

When $N\to \infty$ and under certain regularity conditions \cite{Wilks1938}, the probability distribution of $-2 \ln \Lambda^*$ approaches the chi-square distribution with $\sum\limits_{k=1}^{K}\sum\limits_{m=1}^{M}Q_m-\sum\limits_{m=1}^{M} Q_m$ degrees of freedom.

Hence, we obtain a significance-based objective function named \textbf{S}implified \textbf{R}atio \textbf{S}tatistic (SRS) that is derived from $-2 \ln \Lambda^*$. The general clustering problem is to find a set of $K$ clusters $\pi^*$ from the partition space  where $\pi^*$ is the maximizer of $-2 \ln \Lambda^*$.

Given a categorical data set $X$, we simplify $-2 \ln \Lambda^*$ as follows:
\begin{equation}
	\label{eq: RS}
	\begin{split}
		-2 \ln \Lambda^* & = -2\ln \frac{L^*(\boldsymbol{\theta_0}|X)}{L^*(\boldsymbol{\theta_1}|X,\pi^*(K))} = -2\ln \frac{\prod\limits_{m=1}^{M}\prod\limits_{q=1}^{Q_m}{(\theta_{mq})^{N_{mq}}}}{\prod\limits_{k=1}^{K}\prod\limits_{m=1}^{M}\prod\limits_{q=1}^{Q_m}{\left(\theta_{mq}^{(k)}\right)^{N_{mq}^{(k)}}}}\\
		&= -2\big[\ln \prod\limits_{m=1}^{M}\prod\limits_{q=1}^{Q_m}{(\theta_{mq})^{N_{mq}}} -\ln \prod\limits_{k=1}^{K}\prod\limits_{m=1}^{M}\prod\limits_{q=1}^{Q_m}{\left(\theta_{mq}^{(k)}\right)^{N_{mq}^{(k)}}}\big].
	\end{split}
\end{equation}

As $\big[\ln \prod\limits_{m=1}^{M}\prod\limits_{q=1}^{Q_m}{(\theta_{mq})^{N_{mq}}}\big]$ is a constant for a given data set $X$, the objective function for a given partition of $K$ clusters can be written as:
\begin{equation}
	\label{eq: SRS}
	\begin{split}
		\text{SRS}(X,\pi) & = -\ln \prod\limits_{k=1}^{K}\prod\limits_{m=1}^{M}\prod\limits_{q=1}^{Q_m}{\left(\theta_{mq}^{(k)}\right)^{N_{mq}^{(k)}}} = -\sum\limits_{k=1}^{K}\sum\limits_{m=1}^{M}\sum\limits_{q=1}^{Q_m}\ln\left(\frac{N^{(k)}_{mq}}{N^{(k)}}\right)^{N_{mq}^{(k)}}\\
		&= -\sum\limits_{k=1}^{K}\sum\limits_{m=1}^{M}\sum\limits_{q=1}^{Q_m}\big({N_{mq}^{(k)}}\cdot\ln {N_{mq}^{(k)}}  -{N_{mq}^{(k)}}\cdot\ln {N^{(k)}}\big) \\
		&= \sum\limits_{k=1}^{K}\sum\limits_{m=1}^{M}\sum\limits_{q=1}^{Q_m}{N_{mq}^{(k)}}\cdot\ln {N^{(k)}} - \sum\limits_{k=1}^{K}\sum\limits_{m=1}^{M}\sum\limits_{q=1}^{Q_m}{N_{mq}^{(k)}}\cdot\ln {N_{mq}^{(k)}}\\
		&= M\cdot\sum\limits_{k=1}^{K}\big(N^{(k)}\cdot \ln {N^{(k)}}\big) - \sum\limits_{m=1}^{M}\sum\limits_{k=1}^{K}\sum\limits_{q=1}^{Q_m}\big({N_{mq}^{(k)}}\cdot \ln {N_{mq}^{(k)}}\big).
	\end{split}
\end{equation}

To calculate the objective function according to Equation \ref{eq: SRS}, we need to know the number of attributes, the number of objects in each cluster and the frequency of each attribute value in each cluster. Obviously, these values are easy to obtain and the objective function can be calculated quickly in an incremental manner. Moreover, we have the following remarks on this objective function when it is employed for evaluating a partition of categorical clusters.

\begin{itemize}
	\item To obtain the optimal $\pi^*$ for a given $X$, we try to minimize SRS ($\geq 0$) since it is derived from Equation \ref{eq: RS}, where maximizing $(-2 \ln \Lambda)$ is equivalent to minimizing SRS. The objective function can measure the total homogeneity of a set of categorical clusters. The smaller the SRS value, the more likely it is to obtain a set of compact categorical clusters. For each attribute in each cluster, we have $\sum\limits_{q=1}^{Q_m}{N_{mq}^{(k)}}\cdot \ln {N_{mq}^{(k)}}\leq \sum\limits_{q=1}^{Q_m}{N_{mq}^{(k)}}\cdot \ln {N^{(k)}} = (\sum\limits_{q=1}^{Q_m}{N_{mq}^{(k)}})\cdot \ln {N^{(k)}}=N^{(k)}\cdot \ln N^{(k)}$. If there are more identical attribute values, $\sum\limits_{q=1}^{Q_m}{N_{mq}^{(k)}}\cdot \ln {N_{mq}^{(k)}}$ tends to be larger which means that the second term in the SRS function will be smaller. 
	\item The SRS shares some similarities with entropy-based objective functions such as \textit{expected entropy} (EE) \cite{Li2004, Barbara2002}. As shown in the \ref{sec:appendixA}, the upper and lower bounds of EE can be expressed in terms of SRS.
\end{itemize}

\subsection{The clustering algorithm}
\label{sec: clustering_algorithm}
To verify the effectiveness of the proposed significance-based objective function, we develop a new clustering algorithm in which SRS is utilized as the objective function. As shown in Algorithm \ref{alg: sigcat}, the new clustering algorithm is named $K$-SigCat (\textbf{Sig}nificance-based \textbf{Cat}egorical data clustering with $K$ clusters), whose input is a categorical data set and a user-specified parameter $K$. The parameter $K$ is used to specify the number of clusters and the algorithm returns a locally optimal partition with respect to SRS.

\begin{algorithm}[h]
	\caption{$K$-SigCat}
	\label{alg: sigcat}
	\begin{algorithmic}[1]
		\REQUIRE A categorical data set $X$, and the number of \\
		clusters $K$.
		\ENSURE A set of $K$ locally optimal clusters $\pi^*$.
		\STATE  \textbf{Set:} $no\_change = 0$.
		\STATE \textbf{Initialization:} Put all objects into one cluster:\\$\pi^*=\{\{X\}_1,\{\}_2,\cdots,\{\}_K\}$, \label{algline: int}
		\STATE $eva^*=\text{SRS}(X,\pi^*)$. \label{algline: endint}
		\REPEAT  \label{algline: rpt}
		\STATE $[\pi, eva]=\text{Local\_Search}(X, \pi^*, eva^*)$
		\IF{$eva< eva^*$} \label{algline: if}
		\STATE $\pi^*\gets\pi$ 
		\STATE $eva^*= eva$
		\STATE $no\_change= 0$
		\ELSE
		\STATE $no\_change++$ \label{algline: noc}
		\ENDIF \label{algline: endif}
		\UNTIL{$no\_change == N(K-1)$} \label{algline: endrpt} 
		\RETURN $\pi^*$
	\end{algorithmic}
\end{algorithm}

$K$-SigCat is to find a locally optimal partition by continuously minimizing SRS on the current partition  $\pi^*$. We use a Monte Carlo method to iteratively reduce the SRS value. Initially, we put all objects into one cluster and set remaining $K-1$ clusters to be empty. Then, we use a local search produce to repeatedly refine the partition. This local search procedure will be terminated until the consecutive failure times on reducing the SRS value exceeds $N(K-1)$. 

\begin{algorithm}[h]
	\caption{Local\_Search}
	\label{alg: local_search}
	\begin{algorithmic}[1]
		\REQUIRE A categorical data set $X$, a partition $\pi$,\\
		and its SRS value $eva$.
		\ENSURE  A partition $\pi^{\prime}$, and its SRS value $eva^{\prime}$.
		\STATE Randomly select an object $x$ in a cluster $\pi_A$
		\STATE Randomly put $x$ into another cluster $\pi_B$
		\STATE  $N^{(A)}\gets N^{(A)}-1$ and $N^{(B)}\gets N^{(B)}+1$  \label{algline: N}
		\STATE  $N^{(A)}_{mq}\gets N^{(A)}_{mq}-1$ and $N^{(B)}_{mq}\gets N^{(B)}_{mq}+1$ when $q=x_m$ \label{algline: Nmq}
		\STATE Update partition $\pi^{\prime}\gets \pi$:\\
		$\pi^{\prime}_A\gets \pi_A\setminus \{x\}$ and $\pi^{\prime}_B \gets\pi_B\cup \{x\}$
		\STATE  Compute SRS value incrementally:\\
		$eva^{\prime}= SRS\left(eva,N^{(A)},N^{(B)},N^{(A)}_{mq},N^{(B)}_{mq}\right)$ 
		\RETURN $\pi^{\prime}, eva^{\prime}$
	\end{algorithmic}
\end{algorithm}

In the local search procedure shown in Algorithm \ref{alg: local_search}, we randomly assign an object $x$ to another cluster, and the partition $\pi$ will be updated to obtain a new $\pi^{\prime}$. Given a new partition $\pi^{\prime}$ and an old one $\pi$, we can compute the SRS value of $\pi^{\prime}$ quickly in an incremental way. Suppose a randomly selected object $x$ is moved from cluster $\pi_A$ to cluster $\pi_B$ in $\pi$. Then, the new $eva^{\prime}$ can be computed based on $eva$ by updating local terms in Equation \ref{eq: SRS}: $N^{(A)},N^{(B)},N^{(A)}_{mq},N^{(B)}_{mq}$, as shown in Lines \ref{algline: N}$\sim$\ref{algline: Nmq}. 

When $x$ is moved from $\pi_A$ to $\pi_B$ in Algorithm \ref{alg: local_search}, we can use $x_m$ to update $N^{(k)}_{mq}$($q=x_m$) for each attribute in these two clusters in $\mathcal{O}(1)$ when a hash table is utilized to store attribute values and their counts. Thus, we can find and update all local terms $N^{(A)}_{mq}$, $N^{(B)}_{mq}$ on $M$ attributes in $\mathcal{O}(M)$. Given these updated local terms and the old SRS value, we can obtain the new SRS value in $\mathcal{O}(1)$. Therefore, the time complexity of Algorithm \ref{alg: local_search} is $\mathcal{O}(M)$.

In the initialization step of Algorithm \ref{alg: sigcat} (Lines \ref{algline: int}$\sim$\ref{algline: endint}), we need to scan the data set once to obtain the frequency for each attribute value in order to calculate the SRS value, which requires $\mathcal{O}(NM)$ time.  In the iteration step (Lines \ref{algline: rpt}$\sim$\ref{algline: endrpt}), we need to execute the local search procedure in Algorithm \ref{alg: local_search} $\left(\mathcal{I}+N(K-1)\right)$ times, where $\mathcal{I}$ is the number of iterations before $N(K-1)$ times of consecutive failure on reducing the SRS value. Hence, the iteration step of Algorithm \ref{alg: sigcat} has a time complexity $\mathcal{O}(\mathcal{I}M+NMK)$. The overall time complexity of Algorithm \ref{alg: sigcat} is $\mathcal{O}(NM)+\mathcal{O}(\mathcal{I}M+NMK)=\mathcal{O}(\mathcal{I}M+NMK)$.

\subsection{The statistical significance of categorical clusters}
\label{sec: significance_clusters}
As a by-product, we can calculate a $p$-value from the SRS value to assess the statistical significance of the locally optimal partition $\pi^*$. To fulfill this task, the null distribution of SRS is needed to derive the $p$-value. Unfortunately, obtaining the exact null distribution of SRS analytically is difficult because $-2 \ln \Lambda^*$ does not follow a chi-square distribution in our context due to unidentifiable parameters for clusters \cite{Algeri2020}. Hence, we try to generate multiple reference partitions to construct the null distribution of SRS for deriving the empirical $p$-value.

To obtain a reference partition $\pi^\prime$ of $X$, we generate a randomized data set $X^\prime$ so that $\pi^\prime(X^\prime)$ can be obtained from $K$-SigCat. Here we introduce two methods for generating a randomized data set $X^\prime$ that in which the counts of all attribute values are preserved.

\begin{figure}[t]
	\includegraphics[scale=0.375]{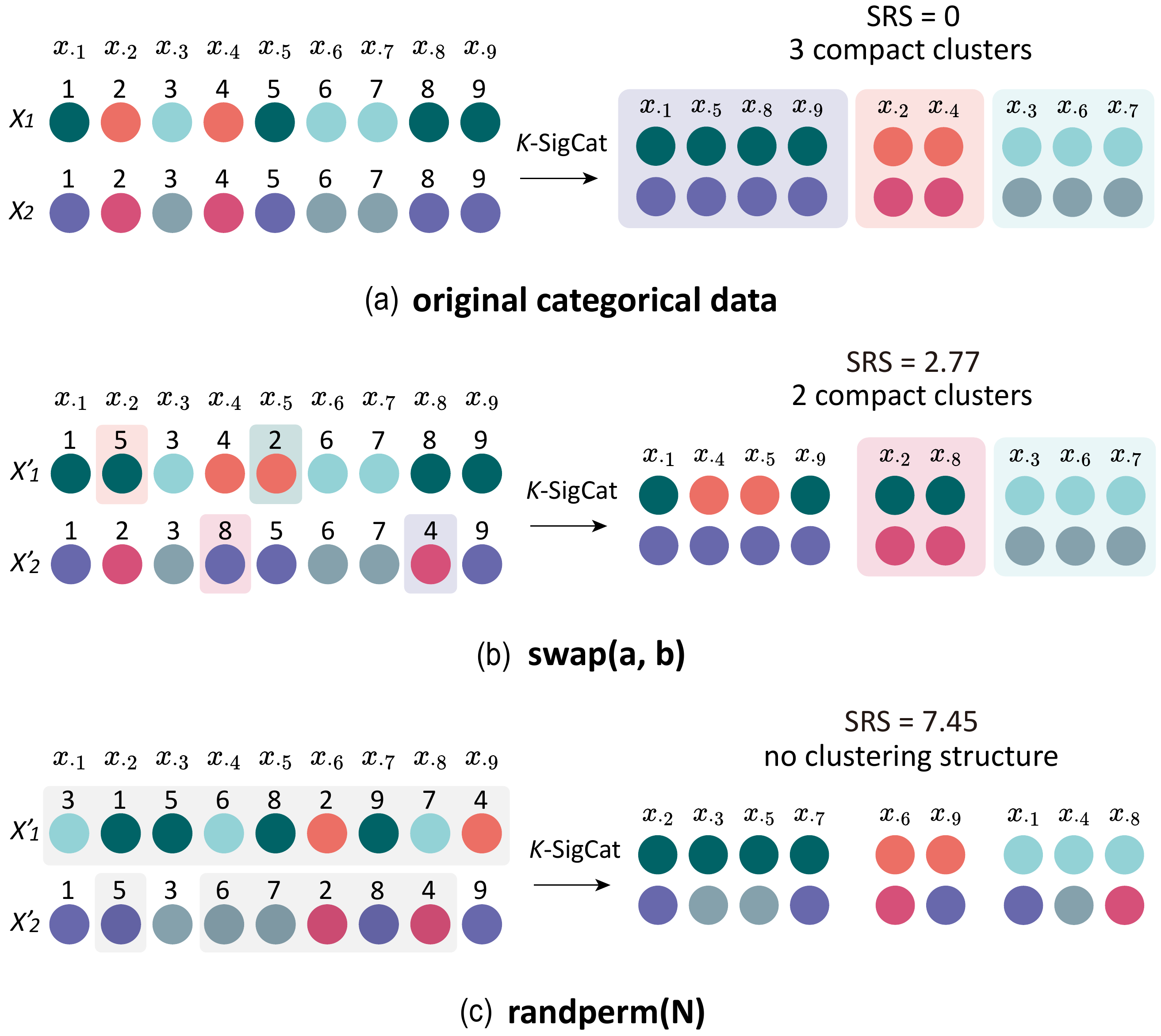}  
	\centering
	\caption{Two methods for generating randomized data sets from a categorical data. (a) An original categorical data $X=[X_1,X_2]$ with $N=9, Q_m=3 (m=1,2)$ and $K=3$. If we conduct the cluster analysis using $K$-SigCat, it is more likely to obtain a partition of three perfect clusters, as shown on the right side. (b) The \textbf{swap} method generates a randomized data by repeatedly exchanging two distinct attribute values in each attribute. This method is able to produce a data set that may partially maintain the cluster structure in the original data set (as shown on the right side). (c) The \textbf{randperm} method generates a randomized data set by re-ordering all attribute values for each attribute independently. This method is prone to produce a data set that is completely random with no clustering structure.} 
	\label{fig:random_data}   
\end{figure}

\begin{itemize}
	\item \textbf{swap(a,b)}: Given $X_m$, we randomly swap two distinct attribute values $x_{ma}$, $x_{mb}$. For example, in Fig.\ref{fig:random_data}(b), $x_{12}, x_{15}$ $(a=2,b=5)$ in $X_1$ and $x_{24},x_{28}$ $(a=4,b=8)$ in $X_2$ are exchanged. We have $x^\prime_{12}=x_{15},x^\prime_{15}=x_{12},x^\prime_{24}=x_{28},x^\prime_{28}=x_{24}$ and other values in $X^\prime=[X_1^\prime,X_2^\prime]$ and $X=[X_1,X_2]$ are the same.
	\item \textbf{randperm(N)}: The original $X_m$ is replaced with a new random permutation of all attribute values. For example, in Fig.\ref{fig:random_data}(c), we obtain a new data set by using two random permutations: $[3,1,5,6,8,2,9,7,4]$ in $X^\prime_1$ and $[1,5,3,6,7,2,8,4,9]$ in $X^\prime_2$.
\end{itemize}

Given a set of randomized data sets $R$, the empirical $p$-value for $\pi^*$ can be calculated as:
\begin{equation}
	\label{eq:pv}
	p\text{-value}= \frac{\lvert{\text{SRS}(X^\prime, \pi^\prime)\leq \text{SRS}(X, \pi^*) | X^\prime\in R}\rvert}{\lvert R \rvert},
\end{equation}
where each $\pi^\prime$ is the partition obtained by running $K$-SigCat on $X^\prime$. That is, the $p$-value is the proportion of randomized data sets in which a better clustering result than $\pi^*$ in terms of SRS value can be obtained.

\subsection{Determine the number of categorical clusters} 
We can plug the likelihood function into the Gap Statistic \cite{Tibshirani2001} to determine the number of clusters. More precisely, we define within-dispersion measures $W_k, k=2,\cdots,K_{max}$, as follows:
\begin{equation}
	W_k= \frac{1}{L(\boldsymbol{\theta_1}|X,\pi^*(k))},
\end{equation}
where a smaller $W_k$ value corresponds to a higher likelihood for a locally optimal $\pi^*$ with $k$ clusters. Then, the logarithmic $W_k$ can be written as:
\begin{equation}
	\ln W_k =\ln \frac{1}{L(\boldsymbol{\theta_1}|X,\pi^*(k))}=\text{SRS}\left(X,\pi^*(k)\right).
\end{equation}

Given a set of randomized data sets $R$, the Gap statistic can be calculated as:
\begin{equation}
	\label{eq:gap}
	\begin{split}
		\text{Gap}(k) & = (1/\lvert R \rvert)\sum_{X^\prime\in R}\ln W^\prime_k-\ln W_k \\
		& = (1/\lvert R \rvert)\sum_{X^\prime\in R}\text{SRS}\left(X^\prime,\pi^\prime(k)\right)-\text{SRS}\left(X,\pi^*(k)\right),
	\end{split}
\end{equation}
where each $\pi^\prime(k)$ is the partition obtained by running $K$-SigCat on $X^\prime$ with the input parameter $K=k$.

Here we determine the number of categorical clusters by finding the maximum of the following modified Gap statistic ($\text{Gap}^*$): 
\begin{equation}
	\label{eq: mGap}
	\hat{k}=\argmax_{k\in [2\colon K_{max}]}\frac{\text{Gap}(k)}{k\cdot SD},
\end{equation}
where $K_{max}$ is a user-specified parameter and SD is the standard derivation of $\text{SRS}\left(X^\prime, \pi^\prime(k)\right)$ for $X^\prime\in R$.

\section{Experiments}
\label{sec:experiments}
First, we evaluate the performance of $K$-SigCat\footnote{https://github.com/hulianyu/SigCat} and five categorical data clustering methods on nineteen real-world data sets in terms of two performance metrics. Next, we show the effectiveness of the empirical $p$-value on assessing the statistical significance of clustering results. Finally, we conduct some experiments to demonstrate that our method for cluster number estimation is comparable to existing methods. All experiments are conducted on an Intel i7-12700K@3.60GHz personal computer with 32G RAM.

We compare $K$-SigCat with both classic algorithms and state-of-the-art algorithms for clustering categorical data. 
\begin{itemize}
	\item $k$-modes \cite{Huang1998}: Similar to $k$-means, the $k$-modes algorithm is the de facto standard clustering algorithm for categorical data. 
	\item Entropy \cite{Li2004}: A entropy-based objective function and a corresponding clustering algorithm is presented in this paper. In the experiments, we try to minimize the entropy-based objective function using the same search procedure employed by our algorithm. 
	\item Distance Vectors (DV) \cite{Zhang2006}: It is the only significance-based method for categorical data clustering in the literature. The partition generated by the algorithm is unique, so we perform DV only once for each data set. The algorithm automatically determine the number of clusters. That is, the number of clusters cannot be manually specified in this algorithm. 
	\item Coupled Metric Similarity (CMS) \cite{Jian2018}: A similarity-based representation is proposed to capture the value-to-attribute-to-object heterogeneous coupling relationships. The parameter $\alpha$ in CMS is fixed to be $0.5$ as recommended by the authors. A similarity matrix produced by CMS is incorporated directly into spectral clustering \cite{Shi2000} in the experiments. 
	\item Novel Distance Weighting mechanism for Homogeneous Distance (HD-NDW) \cite{Zhang2022}: A similarity-based representation method is presented to learn weights of intra-attribute distances for nominal and ordinal attributes. Since $K$-SigCat does not consider the ordinal information of the attributes, we treat all attributes as non-ordered and the number of the ordinal attributes is fixed to be $0$ in HD-NDW for each data set. In this setting, \cite{Zhang2022} reports that HD-NDW still outperforms other state-of-the-art clustering methods such as WOC \cite{Jia2017}, SBC \cite{Qian2015} and CDE \cite{Jian2018a}.
	\item Graph representation (CDC\_DR) \cite{Bai2022}: A similarity-based representation framework is developed to learn and integrate vectors of categorical values from their graph structure. Autoencoder \cite{Hinton2006} and joint operation are used in the experiments.
\end{itemize}

The source codes of above methods (except for the entropy-based method) are publicly available and the corresponding URLs are provided in the \ref{sec:appendixB}. The implementation of the entropy-based method is provided in our source code. In performance comparison, for each algorithm on each data set, we run 50 executions independently and the average results are reported. The number of clusters $K$ used in all algorithms (except for DV) is set to be the ground-truth cluster number on each data set.
\subsection{Data sets and performance metrics}
All real-world data sets used in the experiments can be downloaded from the UCI Machine Learning Repository \cite{Dua2017}. By treating each missing value in each attribute as a unique value, the properties of 19 data sets are shown in Table \ref{tab: datasets}. Consistent with the notation used in the previous section, $Q$ denotes the number of all distinct values in $M$ attributes of a data set. 

\begin{table}[t]
	\small
	\caption{The properties of 19 data sets from UCI repository.}
	\label{tab: datasets}
	\centering
	\begin{tabular}{lccccc}
		\toprule
		Data set & \textit{abbr.} & $N$ & $M$ & $Q$ & $K$ \\
		\midrule
		Lenses & Le & 24 & 4 & 9 & 3 \\
		Soybean (Small) & So & 47 & 21 & 58 & 4 \\
		Lung Cancer & Lc & 32 & 56 & 159 & 3 \\
		Zoo & Zo & 101 & 16 & 36 & 7 \\
		Promoter Sequences & Ps & 106 & 57 & 228 & 2 \\
		Hayes-Roth & Hr & 132 & 4 & 15 & 3 \\
		Lymphography & Ly & 148 & 18 & 59 & 4 \\
		Heart Disease & Hd & 303 & 13 & 57 & 5 \\
		Solar Flare & Sf & 323 & 9 & 25 & 6 \\
		Primary Tumor & Pt & 339 & 17 & 42 & 21 \\
		Dermatology & De & 366 & 33 & 129 & 6 \\
		House Votes & Hv & 435 & 16 & 48 & 2 \\
		Balance Scale & Bs & 625 & 4 & 20 & 3 \\
		Breast Cancer & Bc & 699 & 9 & 90 & 2 \\
		Tic-Tac-Toe & Tt & 958 & 9 & 27 & 2 \\
		Car Evaluation & Ce & 1728 & 6 & 21 & 4 \\
		Chess (kr vs kp) & Ch & 3196 & 36 & 73 & 2 \\
		Mushroom & Mu & 8124 & 20 & 111 & 2 \\
		Nursery & Nu & 12960 & 8 & 27 & 5 \\
		\bottomrule
	\end{tabular}
\end{table}

To compare the performance of different categorical data clustering methods, we use two widely used external metrics: Clustering Accuracy (ACC), Normalized Mutual Information (NMI). The external metrics estimate the difference between the cluster label assigned by each clustering algorithm and the label provided by ground truth. The larger these metrics are, the better the performance of a clustering algorithm. 

The ACC is defined as \cite{Cai2005}:
\begin{equation}
	\text{ACC}=\frac{1}{N}\sum_{i=1}^{N} \delta \left(t_i, \text{map}(c_i)\right), 
\end{equation}
where $t_{i}$ is the ground-truth label for the $i$-th categorical object, $c_i$ is the corresponding cluster label produced by a clustering algorithm, $\text{map}(c_i)$ is a mapping function that maps $c_i$ to an equivalent ground-truth label, $\delta (x,y)=1$ if $x=y$ and $0$ otherwise. The best mapping can be obtained by using Kuhn-Munkres algorithm \cite{Lovasz2009}.

Suppose $\pi$ is a set of clusters produced by a clustering algorithm and $\pi_{GT}$ is the set of ground-truth clusters. The NMI is defined as:
\begin{equation}
	\text{NMI}=\frac{H(\pi)+H(\pi_{GT})-H(\pi,\pi_{GT})}{\left(H(\pi)+H(\pi_{GT})\right)/2},
\end{equation}
where $H(\pi)$ and $H(\pi_{GT})$ denote the entropies of $\pi$ and $\pi_{GT}$ respectively, and $H(\pi,\pi_{GT})$ denotes the corresponding joint entropy.

\subsection{Performance comparison}
The performance comparison results in term of two metrics are listed in Table \ref{tab:comparsion}, where the mean and average rank describe the overall performance of each method. The best and the second best results are marked in boldface. The running time comparison of different algorithms is displayed in Fig.\ref{fig:time-heatmap}. From Table \ref{tab:comparsion} and Fig.\ref{fig:time-heatmap}, we have the following important observations.

\newcommand{\NA}{---}
\begin{table}[p]
	\small
	\caption{The performance comparison of $K$-SigCat, Entropy, DV, $k$-modes, CMS and HD-NDW in terms of ACC and NMI on 19 data sets. We execute each algorithm (except for DV) 50 times and report its average results. DV is unable to find statistically significant clusters on some data sets, which we denote as ``---". For CMS and HD-NDW, the parameters are fixed to be $\alpha=0.5$ and $0$, respectively.}
	\label{tab:comparsion}
	\begin{adjustbox}{width=0.9\textwidth, center} 
		\begin{tabular}{l|cccccccc}
			\toprule
			Metric & Data set&$K$-SigCat&Entropy&DV&$k$-modes&CMS&HD-NDW&CDC\_DR\\
			\midrule
			\multirow{19}{*}{ACC}   
			& Le & 0.537 & 0.545 & \NA & 0.458 & \textbf{0.581} & \textbf{0.548} & 0.488 \\
			& So & \textbf{0.936} & 0.869 & \textbf{1} & 0.702 & \textbf{1} & 0.816 & 0.896 \\
			& Lc & \textbf{0.603} & 0.582 & \NA & 0.563 & \textbf{0.583} & 0.538 & 0.537 \\
			& Zo & \textbf{0.753} & 0.715 & \textbf{0.950} & 0.505 & 0.642 & 0.727 & 0.681 \\
			& Ps & 0.720 & \textbf{0.744} & \NA & 0.557 & \textbf{0.766} & 0.681 & 0.578 \\
			& Hr & \textbf{0.441} & 0.422 & \NA & 0.364 & 0.341 & 0.412 & \textbf{0.491} \\
			& Ly & \textbf{0.530} & 0.524 & 0.345 & 0.486 & 0.505 & \textbf{0.680} & 0.474 \\
			& Hd & 0.381 & 0.362 & 0.168 & 0.317 & 0.317 & \textbf{0.442} & \textbf{0.390} \\
			& Sf & 0.500 & 0.447 & \NA & \textbf{0.560} & 0.439 & \textbf{0.514} & 0.402 \\
			& Pt & 0.298 & \textbf{0.301} & \NA & 0.283 & 0.263 & \textbf{0.305} & 0.281 \\
			& De & 0.701 & 0.617 & 0.497 & 0.577 & \textbf{0.808} & \textbf{0.707} & 0.683 \\
			& Hv & \textbf{0.888} & 0.871 & 0.756 & 0.867 & \textbf{0.878} & 0.867 & 0.836 \\
			& Bs & \textbf{0.446} & 0.440 & \NA & 0.419 & 0.382 & 0.432 & \textbf{0.455} \\
			& Bc & \textbf{0.993} & 0.964 & 0.738 & 0.913 & 0.964 & \textbf{0.969} & 0.787 \\
			& Tt & 0.566 & 0.562 & \NA & 0.603 & 0.517 & 0.572 & 0.557 \\
			& Ce & 0.362 & 0.383 & \NA & \textbf{0.414} & 0.399 & 0.367 & \textbf{0.400} \\
			& Ch & 0.628 & 0.603 & \NA & \textbf{0.697} & 0.648 & \textbf{0.679} & 0.628 \\
			& Mu & 0.751 & 0.720 & 0.245 & 0.712 & \textbf{0.892} & \textbf{0.769} & 0.734 \\
			& Nu & 0.309 & 0.307 & \NA & \textbf{0.499} & 0.289 & \textbf{0.321} & 0.318 \\
			\midrule
			\multicolumn{2}{l}{Mean} & \textbf{0.597} & 0.578 & \NA & 0.552 & \textbf{0.590} & \textbf{0.597} & 0.559 \\
			\multicolumn{2}{l}{Average Rank} & \textbf{2.7} & 3.7  & \NA & 4.2 & 3.5 & \textbf{2.8} & 4.1 \\
			\midrule
			\multirow{19}{*}{NMI}  
			& Le & 0.235 & 0.278 & \NA & \textbf{0.366} & \textbf{0.281} & 0.249 & 0.178 \\
			& So & \textbf{0.919} & 0.872 & \textbf{1} & 0.721 & \textbf{1} & 0.844 & 0.880 \\
			& Lc & \textbf{0.322} & \textbf{0.293} & \NA & 0.221 & 0.237 & 0.217 & 0.210 \\
			& Zo & \textbf{0.785} & 0.756 & \textbf{0.916} & 0.614 & 0.653 & 0.764 & 0.694 \\
			& Ps & 0.194 & \textbf{0.265} & \NA & 0.010 & \textbf{0.298} & 0.149 & 0.032 \\
			& Hr & \textbf{0.050} & \textbf{0.050} & \NA & 0.009 & 0.000 & \textbf{0.031} & 0.362 \\
			& Ly & \textbf{0.218} & 0.205 & 0.161 & 0.161 & 0.183 & \textbf{0.259} & 0.150 \\
			& Hd & \textbf{0.179} & \textbf{0.178} & 0.153 & 0.140 & 0.130 & 0.175 & 0.166 \\
			& Sf & 0.348 & 0.277 & \NA & \textbf{0.355} & 0.233 & \textbf{0.398} & 0.211 \\
			& Pt & \textbf{0.366} & \textbf{0.364} & \NA & 0.312 & 0.311 & \textbf{0.366} & 0.332 \\
			& De & \textbf{0.828} & 0.760 & 0.275 & 0.605 & \textbf{0.805} & 0.776 & 0.786 \\
			& Hv & \textbf{0.479} & 0.474 & 0.331 & 0.477 & \textbf{0.490} & 0.461 & 0.403 \\
			& Bs & 0.027 & 0.032 & \NA & 0.008 & 0.015 & \textbf{0.035} & \textbf{0.041} \\
			& Bc & \textbf{0.836} & 0.779 & 0.371 & 0.554 & 0.778 & \textbf{0.785} & 0.253 \\
			& Tt & 0.007 & \textbf{0.010} & \NA & \textbf{0.018} & 0.000 & 0.007 & 0.006 \\
			& Ce & 0.038 & 0.071 & \NA & 0.051 & \textbf{0.120} & 0.080 & \textbf{0.086} \\
			& Ch & \textbf{0.092} & 0.088 & \NA & 0.000 & 0.087 & 0.057 & \textbf{0.120} \\
			& Mu & 0.260 & 0.243 & 0.226 & 0.226 & \textbf{0.552} & \textbf{0.362} & 0.264 \\
			& Nu & 0.068 & 0.073 & \NA & \textbf{0.331} & 0.033 & \textbf{0.104} & 0.083 \\
			\midrule
			\multicolumn{2}{l}{Mean} & \textbf{0.329} & 0.319 & \NA & 0.273 & \textbf{0.327} & 0.322 & 0.277 \\
			\multicolumn{2}{l}{Average Rank} & \textbf{2.6} & \textbf{3.2}  & \NA & 4.4 & 3.6 & \textbf{3.2} & 3.9 \\
			\bottomrule
		\end{tabular} 
	\end{adjustbox}
\end{table}

\begin{figure}[h]
	\includegraphics[scale=0.45]{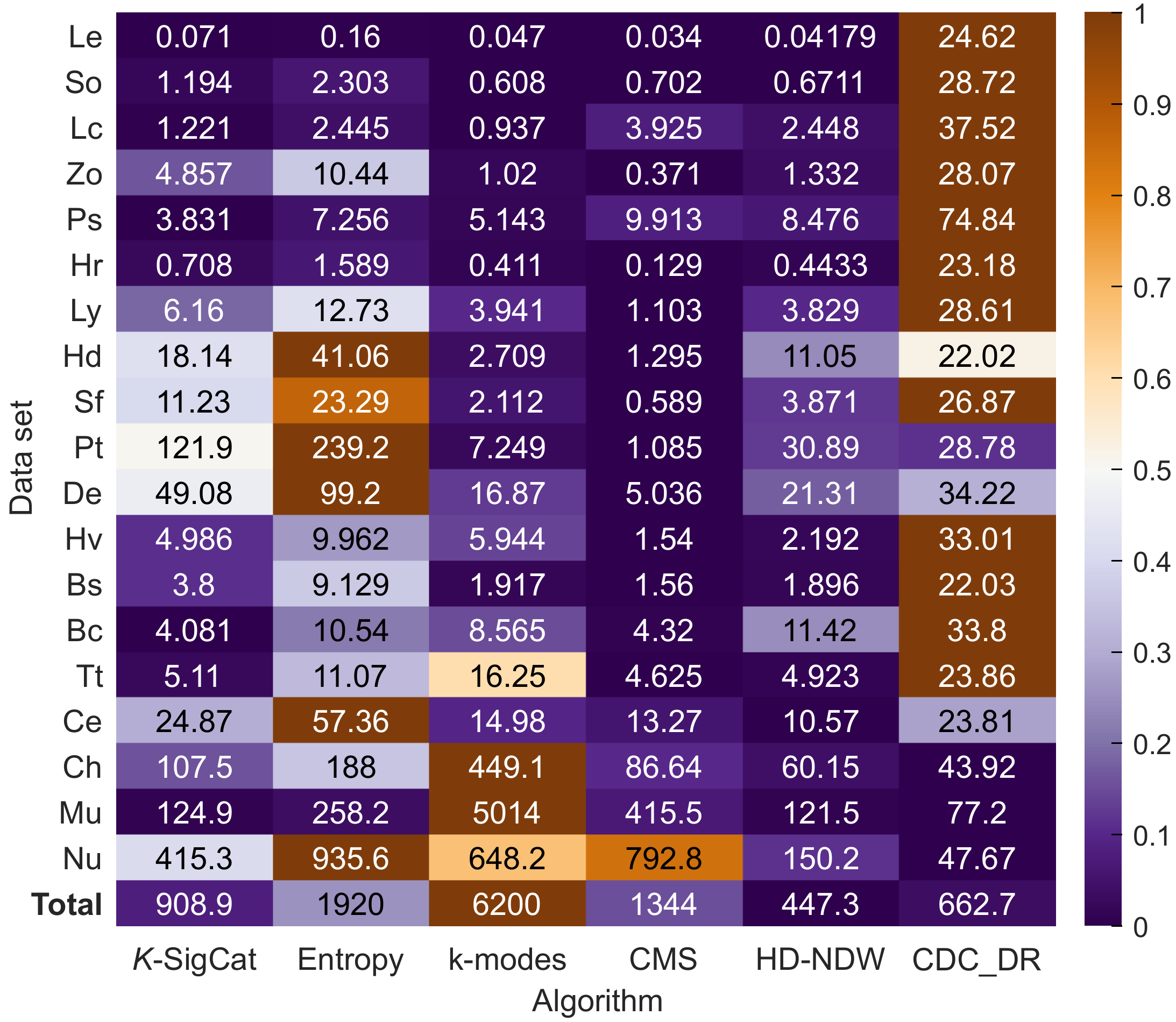}  
	\centering
	\caption{Running time for 50 executions of each algorithm on nineteen data sets. The DV algorithm is not included in this map since it is too time consuming, taking about 8.6 hours to execute once on all data sets. The running time on each data set is normalized into $(0,1]$, as shown in the color map on the right side.} 
	\label{fig:time-heatmap}   
\end{figure}

\begin{itemize}
	\item \ul{Overall performance}: $K$-SigCat can achieve the best overall performance in term of two metrics and acceptable overall performance with respect to the running time, which means that the significance-based objective function and our search procedure are effective. In particular, our algorithm achieves the best or the second best results on 7 data sets (So, Lc, Zo, Hr, Ly, Hv, Bc) in term of both two metrics. Compared with the classic $k$-modes algorithm, our algorithm achieves an overall improvement of more than $8\%$ in ACC and $20\%$ in NMI. 
	\item \ul{Comparison with state-of-the-art methods}: In term of two metrics, $K$-SigCat can achieve a better performance than any method from CMS, HD-NDW and CDC\_DR on 8 data sets (Lc, Zo, Hd, Pt, De, Hv, Bc, Tt), while CMS or HD-NDW outperforms $K$-SigCat on 8 data sets (Le, So, Ps, Ly, Sf, Ce, Mu, Nu). One possible reason is that the similarity metrics learned by these two methods may cause a loss of information. The performance of CMS and HD-NDW could be further improved by using fine-tuned parameters. However, it can be quite time-consuming and the domain knowledge on each data may be required.
	\item \ul{Comparison with entropy-based method}: In term of all metrics, $K$-SigCat can beat the entropy-based method on most of data sets. Since the same local search procedure is also used in the entropy-based method, the difference can be mainly attributed to the employed objective functions. The entropy-based objective function is more complex than SRS. This could partially explain why entropy-based method takes more than twice the running time of $K$-SigCat on most data sets.
	\item \ul{Comparison with DV}: $K$-SigCat spends much less running time than that of DV on all data sets. In term of performance metrics, DV only outperforms $K$-SigCat on So and Zo. DV fails to report the clustering results on some data sets because it cannot identify any statistically significant clusters.
\end{itemize}

\subsection{The statistical significance of a partition}
\label{sec: test_sig}

To calculate an empirical $p$-value according to Equation \ref{eq:pv}, we need to generate a group of randomized data sets $R$. In the experiment, each $X^\prime\in R$ is generated by using the swap method presented in Section \ref{sec: significance_clusters}. More specifically, for each attribute, we use the one-time swap operation to generate $X^\prime_m$ from a given $X_m$. A partition $\pi^\prime$ and its $\text{SRS}^\prime$ value on $X^\prime$ are obtained by using $K$-SigCat. In the following experiments, the group size is fixed to be $\lvert R\rvert=100$, i.e., $100$ randomized data sets are generated. For each data set, we independently generate $50$ groups of randomized data sets and calculate $50$ empirical $p$-values.

\subsubsection{Synthetic data sets}
We first conduct experiments on simulation data sets with no clustering structure. We generate such a synthetic data set according to several user-specified parameters: the number of objects $N$, the number of attributes $M$, the number of distinct feature values and the count of each feature value for each attribute. More precisely, we utilize a given data set $X$ as a template and generate a completely randomized data set $X^{s}$ by using the randperm operation to generate $X_m^{s}$ from a given $X_m$ on each attribute. For a synthetic data set $X^{s}$, a locally optimal partition $\pi^s$ can be obtained by running $K$-SigCat when $K$ is specified as the ground-truth cluster number of $X$. We can obtain the empirical $p$-value of $\pi^s$ on $X^{s}$ by generating a set of randomized data sets $R^s$. For each one of 19 UCI data sets, we construct a synthetic data set accordingly which is denoted by X(s). 

\begin{figure}[t]
	\includegraphics[scale=0.42]{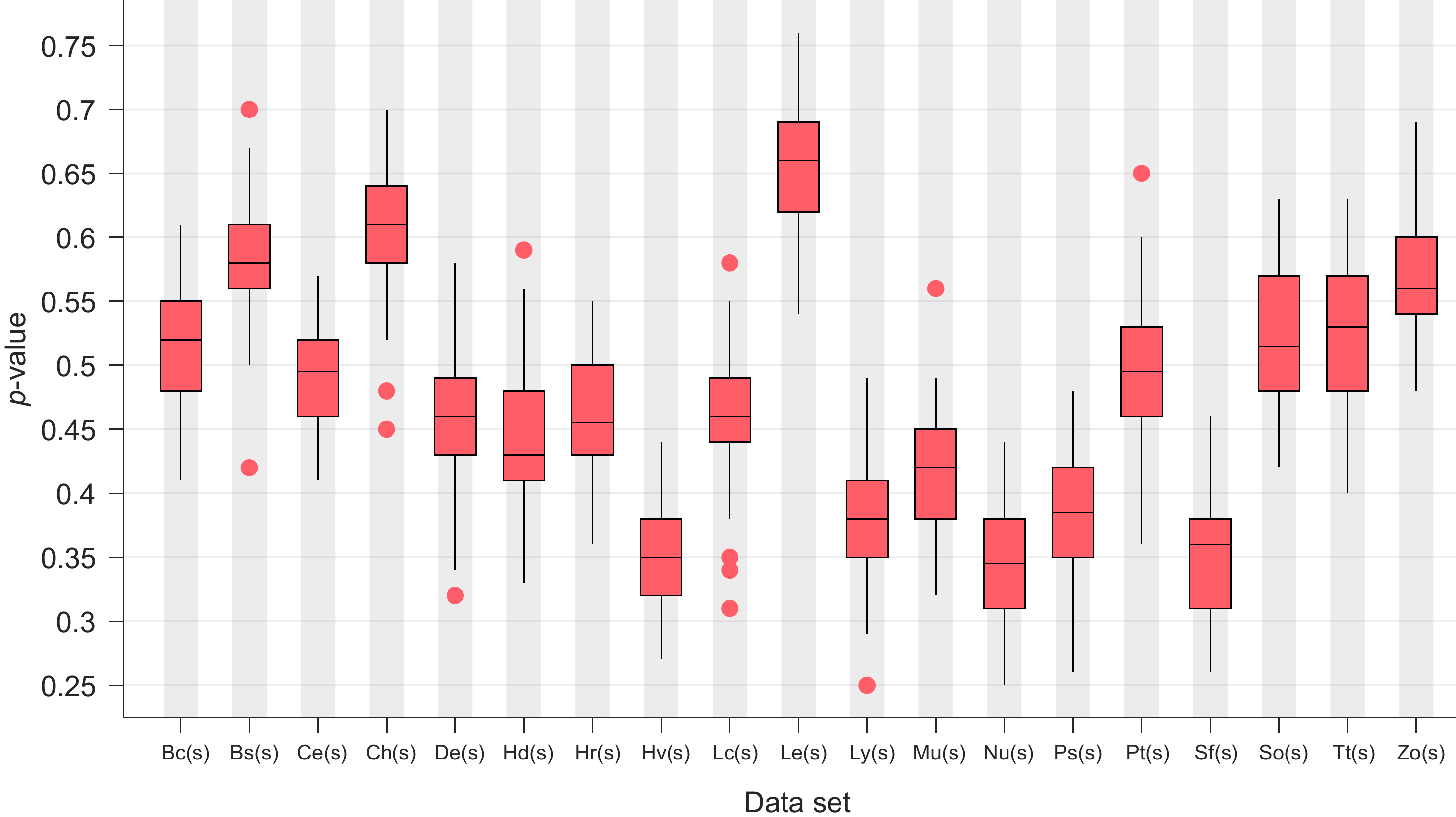}  
	\centering
	\caption{The box-plots of empirical $p$-values for $\pi^s$ on 19 synthetic data sets. For each data set $X^s$, we independently generate $50$ groups of randomized data sets to calculate the $p$-values.} 
	\label{fig:pxr}   
\end{figure}

\begin{figure}[t]
	\includegraphics[scale=0.42]{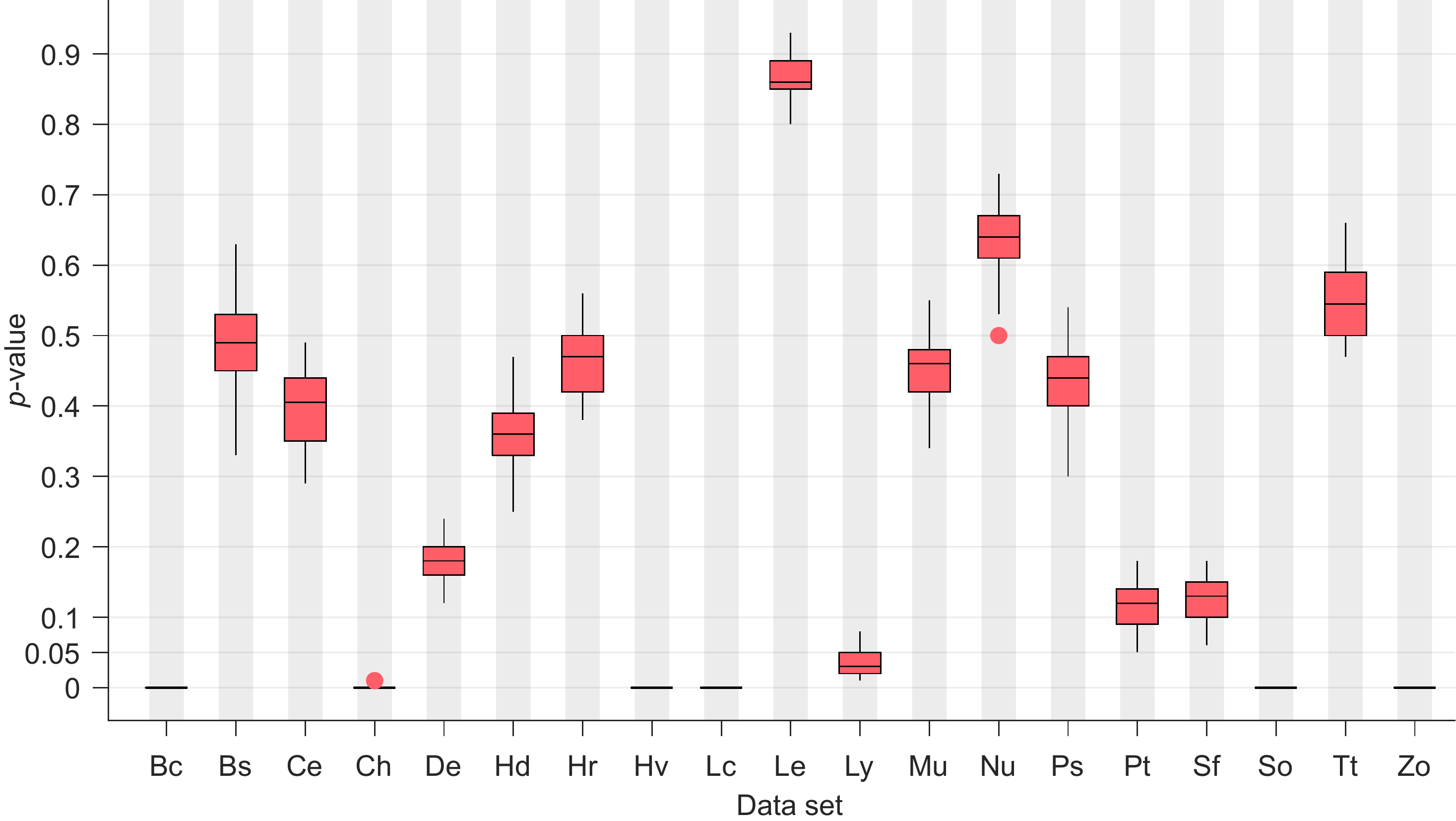}  
	\centering
	\caption{The box-plots of empirical $p$-values for $\pi^*$ on 19 real-world data sets. For each data set $X$, we independently generate $50$ groups of randomized data sets to calculate $p$-values.} 
	\label{fig:px}   
\end{figure} 

The significance testing results on the partitions from these simulated data sets are presented in Fig.\ref{fig:pxr}. As expected, all empirical $p$-values are quite large, indicating that each partition obtained from such synthetic data is not statistically meaningful.  

\subsubsection{Real data sets}
Next, we calculate the $p$-value on the partition obtained from real UCI data sets with known clustering structure. As shown in Fig.\ref{fig:px}, not all partitions produced by $K$-SigCat on these data sets are statistically significant if the significance level is specified to be $0.05$. More precisely, only the clustering results on Bc, Ch, Hv, Lc, Ly, So and Zo are statistically significant and we have the following key findings.

\begin{itemize}
	\item \ul{The consistency with DV:} At least one significant cluster is reported by DV on So, Zo, Hv, Bc, De, Ly, Mu and Hd. Although the statistical significance of an individual cluster is not equivalent to that of an entire partition found by $K$-SigCat, we can still partially interpret why the results of DV on some data sets are better. DV performs relatively well on So, Zo, Hv and BC with $\text{ACC}>0.7$, because these data sets have a statistically significant clustering structure. 
	\item \ul{Comparison between significant and non-significant partitions:} According to the ACC on each data set, the first four best partitions (on Bc, So, Hv and Zo) are all statistically significant while the last seven worst partitions (on Pt, Nu, Ce, Hd, Hr, Bs, and Sf) are all not statistically significant. The statistically significant partition is more likely to correlate with a good clustering structure. For example, average values of ACC and NMI for those partitions with $p\text{-values}>0.05$ are $0.501$ and $0.217$, respectively. In contrast, average ACC and NMI for statistically significant partitions are $0.762$ and $0.522$, respectively. A partition that is not statistically significant may indicate the non-existence of a clustering structure for the target data set, as the empirical $p$-value is almost indistinguishable from that on a completely randomized data set.
\end{itemize}

\subsection{Estimate the number of categorical clusters}
To determine the number of categorical clusters $\hat{k}$, we compare existing methods with our $\text{Gap}^*$ method in Equation \ref{eq: mGap}. In the following experiments, we generate a set of randomized data sets $R$ in the same way as Section \ref{sec: test_sig}. The size of $R$ is fixed to be $\lvert R\rvert=20$ and the parameter $K_{max}$ is fixed to be $10$ (with one exception: the $K_{max}$ is set to be $30$ on the Primary Tumor data set). For each data set, we independently generate $50$ groups of randomized data sets and estimate $\hat{k}$ with $50$ times.

\begin{table}[p]
	\caption{The estimated number of categorical clusters on 19 UCI data sets. $K$ is the ground-truth cluster number. The results of the $\text{Gap}^*$ method and two baselines are calculated based on the same batch of $\pi^*(k)$ with varied $k$ on each data set. For each data set, the average value of $50$ estimated cluster numbers by using the $\text{Gap}^*$ method is shown in parentheses. The results of average difference between $\hat{k}$ and the ground-truth cluster number are shown in the last line.}
	\label{tab: clusternumber}
	\centering
	\begin{tabular}{cclcc}
		\toprule
		Data & $K$ & $\text{Gap}^*$ &  BIC & BKPlot \\
		\midrule
		Le & 3 & \textbf{3} (2.73) & 2 & 2 \\
		So & 4 & \textbf{4} (5.25) & 2 & 3 \\
		Lc & 3 & \textbf{3} (2.66) & 2 & 2 \\
		Zo & 7 & \textbf{7} (6.96) & 3 & 3 \\
		Ps & 2 & 3 (2.94) & \textbf{2} & \textbf{2} \\
		Hr & 3 & 2 (3.10) & 4 & \textbf{3} \\
		Ly & 4 & \textbf{4} (3.34) & 2 & 3 \\
		Hd & 5 & 2 (2.50) & 2 & 3 \\
		Sf & 6 & 2 (2.40) & 5 & 3 \\
		Pt &21 & 2 (2.04) & 2 & 4 \\
		De & 6 & 2 (2.06) & 3 & 2 \\
		Hv & 2 & \textbf{2} (\textbf{2}) & 3 & \textbf{2} \\
		Bs & 3 & \textbf{3} (5.48) & 9 & 5 \\
		Bc & 2 & \textbf{2} (\textbf{2}) & \textbf{2} & \textbf{2} \\
		Tt & 2 & \textbf{2} (2.54) & 9 & 3 \\
		Ce & 4 & 5 (3.94) & 10 & 2 \\
		Ch & 2 & \textbf{2} (\textbf{2}) & 10 & 3 \\
		Mu & 2 & 6 (4.76) & 10 & 5 \\
		Nu & 5 & 3 (4.28) & 10 & 7 \\
		\midrule
		\multicolumn{2}{l}{Average Difference} & 2.05 (2.06) & 4.11 & 2.37 \\
		\bottomrule
	\end{tabular}
\end{table}

Since the standard Gap Statistic method \cite{Tibshirani2001} is developed to handle numerical data sets, it is not included in the performance comparison. The following two baselines are employed in the experiment: Bayesian Information Criterion (BIC) \cite{Hastie2009}, Best-K Plot (BKPlot) \cite{Chen2009}.

In our model, we have a user-specified $k$, the sample size $\mathbb{N}=M\cdot N$, the total number of parameters $\mathbb{D}=k\cdot\sum\limits_{m=1}^{M}Q_m$, and the maximum log-likelihood value $\mathbb{L^*}=-\text{SRS}\left(X,\pi^*(k)\right)$, where $\text{SRS}\left(X,\pi^*(k)\right)$ is obtained by using $K$-SigCat when $K=k$. Then, given $X$ and $k$, the BIC can be calculated as:
\begin{equation}
	\label{eq: BIC}
	\begin{aligned}
		\text{BIC}(k) & =-2\cdot \mathbb{L^*}(k)+\mathbb{D}\cdot \ln \mathbb{N}\\
		& = 2\cdot \text{SRS}\left(X,\pi^*(k)\right) +(k\cdot\sum\limits_{m=1}^{M}Q_m)\cdot \ln MN.
	\end{aligned}
\end{equation}
The number of categorical clusters $\hat{k}$ is one of $k\in [2\colon K_{max}]$ that can achieve  the minimum $\text{BIC}(k)$.

As critical knees in BKPlot, the second-order difference is defined as:
\begin{equation}
	\label{eq: bk}
	B(k) =\beta^2\mathbb{S}(k) = \beta\mathbb{S}(k-1) - \beta\mathbb{S}(k),
\end{equation}
where $\beta\mathbb{S}(k)=\mathbb{S}(k)-\mathbb{S}(k+1)$ and $\mathbb{S}(k)=\text{SRS}\left(X,\pi^*(k)\right)-\text{SRS}\left(X,\pi^*(k+1)\right)$. To determine the best $\hat{k}$ from multiple critical knees in BKPlot, we use one of $k\in [2\colon K_{max}]$ that has the maximum $B(k)$. 

For the $\text{Gap}^*$ method, we report the most frequent $\hat{k}$ and the average value from $50$ estimated cluster numbers. Given $\pi^*(k)$ with different $k\in [2\colon K_{max}]$ on each data set, the results of the $\text{Gap}^*$ method and three baselines are shown in Table \ref{tab: clusternumber}. The estimated cluster numbers are marked in boldface when $\hat{k}=K$. We observe that $\text{Gap}^*$ can find the ground-truth cluster number based on the most frequent $\hat{k}$ on 10 data sets: Le, So, Lc, Zo, Ly, Hv, Bs, Bc, Tt and Ch. $\text{Gap}^*$ also achieves the best overall performance in terms of average difference.

\section{Conclusions and Future Work}
\label{sec:conclusion}
To conduct cluster analysis on categorical data set in a statistically sound manner, we propose to assess the statistical significance of a set of clusters using the likelihood ratio test. As a consequence, we obtain a significance-based objective function for clustering validation and a corresponding clustering algorithm $K$-SigCat. In addition, we further present methods on how to solve the clusterability problem by calculating an empirical $p$-value and tackle the cluster number estimation issue by plugging the test statistic into the well-known gap statistic method. A series of experiments on both real and simulated data sets verify the effectiveness and efficiency of our method on cluster detection, clustering tendency assessment and cluster number estimation.

In the field of categorical data clustering, there is still a lack of effective methods on assessing the statistical significance of categorical clusters. Therefore, our method has the following salient features: (1) It can help to justify whether a given categorical data set has a statistically significant clustering structure or not. (2) It can help to determine whether a partition reported by a third-party clustering algorithm is statistically significant or not. 

There are two issues in our method that need to be further addressed. Firstly, calculating an empirical $p$-value requires generating multiple reference partitions, which is quite time-consuming especially for large categorical data sets. Secondly, there is no theoretical guarantee to obtain the globally optimal partition, since a Monte Carlo search procedure is used in $K$-SigCat.

In future work, we will investigate how to analytically calculate the $p$-value of a partition such that it can be directly employed as the objective function. However, this is a challenging issue since it is difficult to obtain the exact null distribution of LR test statistics in the context of cluster analysis. Hence, we will further investigate the potential of utilizing other types of significance tests to derive an analytical $p$-value for one cluster or a set of clusters. 

\section*{Acknowledgment}
This work has been partially supported by the Natural Science Foundation of China under Grant No. 61972066.





\appendix
\section{The upper and lower bounds of EE in terms of SRS}
\label{sec:appendixA}
EE can be written as:
\begin{equation*}
	\begin{split}
		\text{EE}(X,\pi) & = -\frac{1}{N}\sum\limits_{m=1}^{M}\sum\limits_{k=1}^{K}\sum\limits_{q=1}^{Q_m}\big({N_{mq}^{(k)}}\cdot \ln\frac{{N_{mq}^{(k)}}}{N^{(k)}} + (N^{(k)}-N_{mq}^{(k)})\cdot \ln\frac{N^{(k)}-N_{mq}^{(k)}}{N^{(k)}}\big)\\
		&= -\frac{1}{N}\sum\limits_{m=1}^{M}\sum\limits_{k=1}^{K}\sum\limits_{q=1}^{Q_m}(\cdots)
	\end{split}
\end{equation*}
\begin{equation}
	\label{eq: EE_SRS}
	\begin{split}
		\cdots & = N_{mq}^{(k)}\cdot \ln N_{mq}^{(k)}-N_{mq}^{(k)}\cdot \ln N^{(k)} + (N^{(k)}-N_{mq}^{(k)})\cdot \left(\ln(N^{(k)}-N_{mq}^{(k)})-\ln(N^{(k)})\right)\\
		&=N_{mq}^{(k)}\cdot\ln N_{mq}^{(k)}-N_{mq}^{(k)}\cdot \ln N^{(k)} + N^{(k)}\cdot \ln(N^{(k)}-N_{mq}^{(k)})-N^{(k)}\cdot \ln N^{(k)}\\
		&-N_{mq}^{(k)}\cdot \ln(N^{(k)}-N_{mq}^{(k)})+ N_{mq}^{(k)}\cdot \ln N^{(k)}\\
		&= N_{mq}^{(k)}\cdot \ln N_{mq}^{(k)}-N^{(k)}\cdot \ln N^{(k)} + (N^{(k)}-N_{mq}^{(k)})\cdot \ln(N^{(k)}-N_{mq}^{(k)})
	\end{split}
\end{equation}

We rewrite $\text{SRS}$ and its variant $\text{SRS}^{\prime}$ as:
\begin{equation}
	\label{eq: rw1_SRS}
	\text{SRS}(X,\pi, N^{(k)}_{mq}) = M\cdot\sum\limits_{k=1}^{K}\big(N^{(k)}\cdot \ln {N^{(k)}}\big) - \sum\limits_{m=1}^{M}\sum\limits_{k=1}^{K}\sum\limits_{q=1}^{Q_m}\big({N_{mq}^{(k)}}\cdot \ln {N_{mq}^{(k)}}\big),
\end{equation}
\begin{equation}
	\label{eq: rw2_SRS}
	\begin{split}
	\text{SRS}^{\prime}(X,\pi, N^{(k)}-N^{(k)}_{mq}) & = M\cdot\sum\limits_{k=1}^{K}\big(N^{(k)}\cdot \ln {N^{(k)}}\big)\\ 
	& - \sum\limits_{m=1}^{M}\sum\limits_{k=1}^{K}\sum\limits_{q=1}^{Q_m}\big(({N^{(k)}-N_{mq}^{(k)}}) \cdot \ln ({N^{(k)}-N_{mq}^{(k)}})\big).
	\end{split}
\end{equation}

Then, bring SRS and its variant into EE, we have:
\begin{equation}
	\text{EE} = \frac{1}{N}(\text{SRS}+\text{SRS}^{\prime}) + \frac{1}{N}\sum\limits_{k=1}^{K}\big(\sum\limits_{m=1}^{M}\sum\limits_{q=1}^{Q_m}N^{(k)}\cdot \ln N^{(k)} - (2M\cdot N^{(k)}\cdot \ln{N^{(k)}})\big),
\end{equation}
where $N, M$ and $K, N^{(k)}$ are easy to be obtained from $X$ and $\pi$ respectively. The relation between EE and SRS is as follows:
\begin{equation}
	\label{eq: re_SRS}
	\text{EE} -\frac{1}{N}(\text{SRS}+\text{SRS}^{\prime})=\frac{1}{N}\sum\limits_{k=1}^{K}\big(\sum\limits_{m=1}^{M}\sum\limits_{q=1}^{Q_m}N^{(k)}\cdot \ln N^{(k)} - (2M\cdot N^{(k)}\cdot \ln{N^{(k)}})\big).
\end{equation}

Hence, we can obtain the connection between EE and SRS under several special cases:
\begin{itemize}
	\item If $\sum\limits_{m=1}^{M}\sum\limits_{q=1}^{Q_m}1=2M$, we have $\text{EE} = \frac{1}{N}(\text{SRS}+\text{SRS}^{\prime})$.
	\item If $\sum\limits_{m=1}^{M}\sum\limits_{q=1}^{Q_m}1>2M$, we have $\text{EE} > \frac{1}{N}(\text{SRS}+\text{SRS}^{\prime})$.
	\item If $\sum\limits_{m=1}^{M}\sum\limits_{q=1}^{Q_m}1<2M$, we have $\text{EE} < \frac{1}{N}(\text{SRS}+\text{SRS}^{\prime})$.
\end{itemize}

Finally, given any $X$ and $\pi$, the number of distinct attribute values for each attribute in each cluster should fall into the interval $[1,N/K]$. The upper and lower bounds of EE in terms of SRS is:
\begin{equation*}
	\frac{1}{N}(\text{SRS}+\text{SRS}^{\prime}) +\frac{1}{N}\sum\limits_{k=1}^{K}(\frac{N}{K}-2M)\cdot N^{(k)}\cdot \ln{N^{(k)}}
\end{equation*}
\begin{equation}
	\geq EE \geq\\
\end{equation}
\begin{equation*}
	\frac{1}{N}(\text{SRS}+\text{SRS}^{\prime}) +\frac{1}{N}\sum\limits_{k=1}^{K}(-M)\cdot N^{(k)}\cdot \ln{N^{(k)}}.
\end{equation*}

\section{The URLs for compared algorithms in the experiments}
\label{sec:appendixB}
All source codes are implemented in Matlab. $k$-modes and DV are obtained from \url{https://github.com/hetong007/CategoricalClustering}, CMS is obtained from \url{https://github.com/jiansonglei/CMS}, HD-NDW is obtained from \url{https://www.comp.hkbu.edu.hk/~ymc/papers/journal/Source_code/TPAMI-2021-3056510.zip}, and CDC\_DR is obtained from \url{https://www.researchgate.net/publication/363832943_Matlab_code_of_CDC_DR}.

\section{The detailed cluster number estimation results on several data sets}
\label{sec:appendixC}
We use 5 data sets (Ch, Lc, Ly, So, Zo) as examples to compare $\text{Gap}^*$, BIC and $B$ (BKPlot) on varied $k$. The function $\text{BIC}(k)$ and $B(k)$ fail to detect ground-truth cluster number on those data sets, while $\text{Gap}^*(k)$ function can report the right cluster number. From Fig.\ref{fig:estK}, we can observe the difference between the $\text{Gap}^*$ method and two baselines when $k$ is varied from $2$ to $10$.
\begin{figure}[p]
	\includegraphics[scale=0.15]{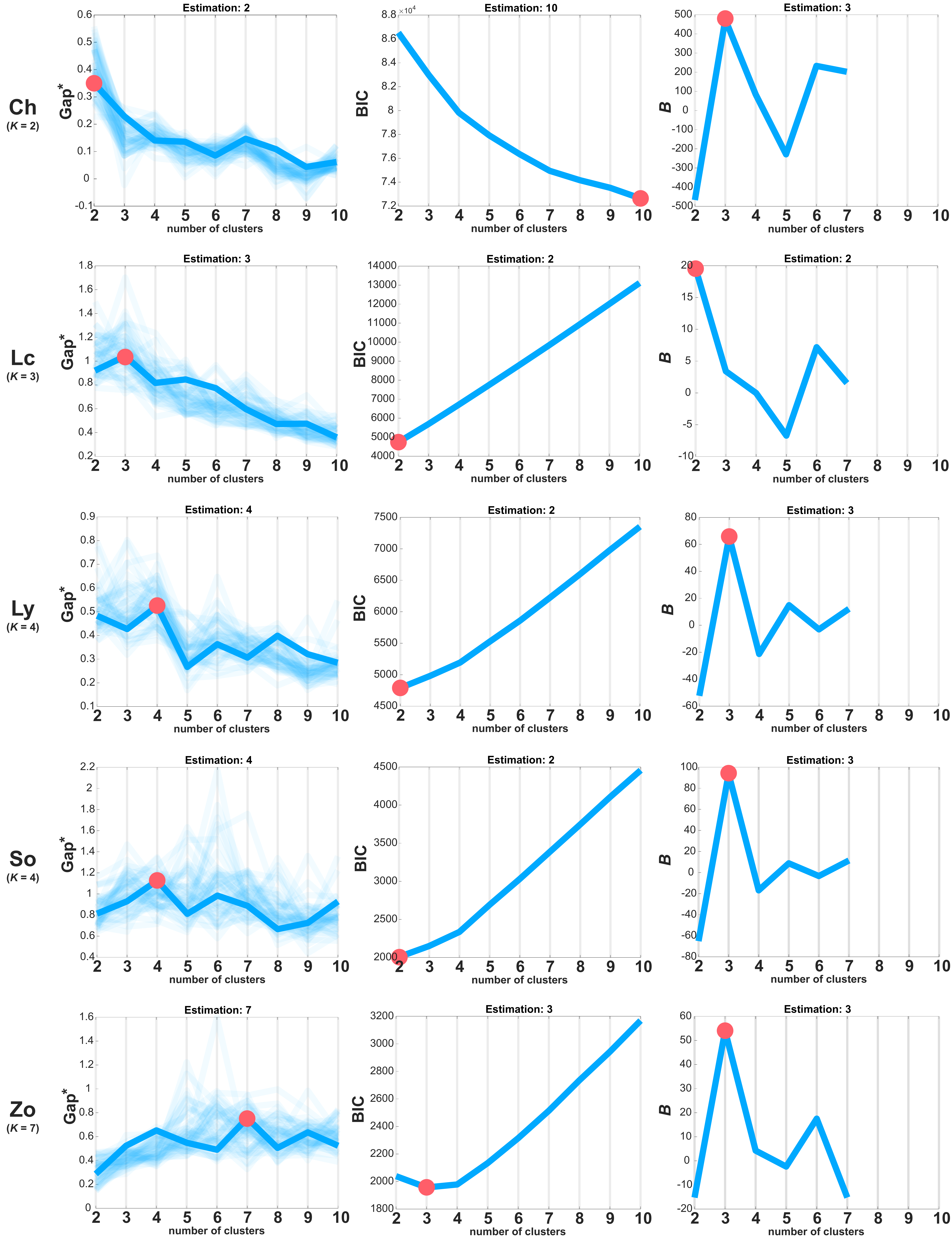}  
	\centering
	\caption{In the line-plots of $\text{Gap}^*(k)$ and $\text{BIC}(k)$, $k$ is varied from $2$ to $10$. Since calculating $B(8)$ requires a value on $k=11$, we only have $B(k)$ values when $k$ is increased from $2$ to $7$. The red dots mark the value of $\hat{k}$ estimated by each method on each data set. In each $\text{Gap}^*$ chart, we have lines corresponding to the $50$ sets of $\text{Gap}^*(k)$ values, where one of the lines for the most frequent $\hat{k}$ is prominently displayed.}
	\label{fig:estK}   
\end{figure}

\section{Extension to numerical data sets}
\label{sec:appendixD}
\textbf{Method}: Firstly, we transform a numerical data set $X$ into a categorical data set $X^c$ by using $k$-means on each attribute, i.e., each $X^c_m$ has $k$ distinct values or labels from the results of $k$-means on $X_m$. Then, we use categorical data clustering methods on $X^c$ to obtain the clustering results. The number of clusters used in all algorithms is set to be the ground-truth cluster number on each data set. \textbf{Results}: We compare the results of $k$-means++ on $X$ and $K$-SigCat, $k$-modes on the corresponding $X^c$. As shown in Table \ref{tab: numerical}, $K$-SigCat is able to integrate the clustering results of $k$-means on each attribute, and achieve better performance than $k$-means. $K$-SigCat can also achieve better performance than $k$-modes on most data sets.
\begin{table}[t]
	\small
	\caption{The performance comparison of $k$-means on $X$, and $K$-SigCat, $k$-modes on the corresponding $X^c$ in terms of ACC, NMI and F-score on 10 UCI data sets $X$. We execute each algorithm 50 times and report its average results.}
	\label{tab: numerical}
	\begin{adjustbox}{width=1\textwidth, center}
		\begin{tabular}{l|ccc|ccc|ccc}
			\toprule
			\multicolumn{1}{l}{} & \multicolumn{3}{c}{ACC} & \multicolumn{3}{c}{NMI} & \multicolumn{3}{c}{F-score} \\
			Data set & $K$-SigCat & $k$-modes & $k$-means & $K$-SigCat & $k$-modes & $k$-means & $K$-SigCat & $k$-modes & $k$-means \\
			\midrule
			iris & \textbf{0.926} & 0.693 & 0.877 & \textbf{0.815} & 0.585 & 0.735 & \textbf{0.871} & 0.715 & 0.809 \\
			ionosphere  & \textbf{0.719} & 0.665 & 0.709 & \textbf{0.154} & 0.063 & 0.125 & \textbf{0.609} & 0.578 & 0.608 \\
			wine & \textbf{0.948} & 0.831 & 0.644 & \textbf{0.818} & 0.611 & 0.414 & \textbf{0.898} & 0.747 & 0.587 \\
			glass & 0.507 & 0.378 & \textbf{0.534} & \textbf{0.373} & 0.166 & 0.361 & 0.406 & 0.296 & \textbf{0.492} \\
			wdbc & \textbf{0.930} & 0.904 & 0.854 & \textbf{0.617} & 0.553 & 0.422 & \textbf{0.878} & 0.846 & 0.788 \\
			movement & 0.374 & 0.312 & \textbf{0.444} & 0.491 & 0.377 & \textbf{0.568} & 0.274 & 0.200 & \textbf{0.349} \\
			vertebral & 0.527 & 0.548 & \textbf{0.596} & 0.243 & 0.260 & \textbf{0.389} & 0.496 & 0.504 & \textbf{0.588} \\
			yeast & 0.209 & 0.201 & \textbf{0.368} & 0.117 & 0.041 & \textbf{0.236} & 0.175 & 0.186 & \textbf{0.282} \\
			leukemia & \textbf{0.942} & 0.436 & 0.713 & \textbf{0.801} & 0.066 & 0.501 & \textbf{0.890} & 0.488 & 0.666 \\
			seeds & 0.863 & 0.865 & \textbf{0.893} & 0.675 & 0.668 & \textbf{0.700} & 0.769 & 0.769 & \textbf{0.809} \\
			\midrule
			Mean & \textbf{0.694} & 0.583 & 0.663 & \textbf{0.510} & 0.339 & 0.445 & \textbf{0.627} & 0.533 & 0.598 \\
			\bottomrule
	\end{tabular}
	\end{adjustbox}
\end{table}

\bibliographystyle{elsarticle-num} 
\bibliography{SCAT-AIJ}

\end{document}